\newcommand{\cmark}{\ding{51}}%
\newcommand{\xmark}{\ding{55}}%
\algnewcommand\algorithmicforeach{\textbf{for each}}
\newtheorem{lemma}{Lemma}
\title{\LARGE \bf
    ikd-Tree: An Incremental K-D Tree for Robotic Applications
}
\author{Yixi Cai, Wei Xu and Fu Zhang 
	\thanks{Y. Cai, W. Xu and F. Zhang are with the Department of Mechanical Engineering, Hong Kong University, Hong Kong SAR., China. {\tt\small $\{$yixicai$\}$@connect.hku.hk, $\{$xuweii, fuzhang$\}$@hku.hk}
	}
}%
\begin{document}
	\maketitle
	\newcommand{\note}[1]{\textcolor{red}{\emph{\bf#1}}}
	\newcommand\footnoteref[1]{\protected@xdef\@thefnmark{\ref{#1}}\@footnotemark}
	\newlength{\bibitemsep}\setlength{\bibitemsep}{.0238\baselineskip}
	\newlength{\bibparskip}\setlength{\bibparskip}{0pt}
	\let\oldthebibliography\thebibliography
	\renewcommand\thebibliography[1]{%
		\oldthebibliography{#1}%
		\setlength{\parskip}{\bibitemsep}%
		\setlength{\itemsep}{\bibparskip}%
	}
	\begin{abstract}
    This paper proposes an efficient data structure, ikd-Tree, for dynamic space partition. The ikd-Tree incrementally updates a k-d tree with new coming points only, leading to much lower computation time than existing static k-d trees. Besides point-wise operations, the ikd-Tree supports several features such as box-wise operations and down-sampling that are practically useful in robotic applications. In parallel to the incremental operations (i.e., insert, re-insert, and delete), ikd-Tree actively monitors the tree structure and partially re-balances the tree, which enables efficient nearest point search in later stages. The ikd-Tree is carefully engineered and supports multi-thread parallel computing to maximize the overall efficiency. We validate the ikd-Tree in both theory and practical experiments. On theory level, a complete time complexity analysis is presented to prove the high efficiency. On experiment level, the ikd-Tree is tested on both randomized datasets and real-world LiDAR point data in LiDAR-inertial odometry and mapping application. In all tests, ikd-Tree consumes only 4\% of the running time in a static k-d tree.

	\end{abstract}

	\section{Introduction}\label{sect_intro}
	The K-Dimensional Tree (K-D Tree) is an efficient data structure that organizes multi-dimensional point data \cite{bentley1975kdtree} which enables fast search of nearest neighbors, an essential operation that is widely required in various robotic applications\cite{friedman1975nearestsearch}. For example, in LiDAR odometry and mapping, k-d tree-based nearest points search is crucial to match a point in a new LiDAR scan to its correspondences in the map (or the previous scan) \cite{nuchter2007cachedKDtreeICP,segal2009generalizedicp, zhang2014loam, shan2018lego, lin2020loamlivox, xu2020fastlio}. Nearest points search is also important in motion planning for fast obstacle collision check on point-cloud, such as in \cite{ichnowski2015fast, gao2016online, lopez2017aggressive, florence2018nanomap, gao2019flying, ji2020mapless}. 
	
	Common-used k-d tree structure in robotic applications\cite{rusu20113d} is ``static", where the tree is built from scratch using all points. This contradicts with the fact that the data is usually acquired sequentially in actual robotic applications. In this case, incorporating a frame of new data to existing ones by re-building the entire tree from scratch is typically very inefficient and time-consuming. As a result, k-d trees are usually updated at a low frequency \cite{zhang2014loam, shan2018lego, lin2020loamlivox} or simply re-built only on the new points \cite{lopez2017aggressive, florence2018nanomap}. 
	
	
	To fit the sequential data acquisition nature, a more natural k-d tree design would be updating (i.e., insert and delete) the existing tree locally with the newly acquired data. The local update would effectively eliminate redundant operations in re-building the entire tree, and save much computation. Such a dynamic k-d tree is particularly promising when the new data is much smaller than existing ones in the tree. 
	
	
	However, a dynamic k-d tree brings suitable for robotic applications several challenges: 1) It should support not merely efficient points operations such as insertion and delete but also space operations such as point-cloud down-sampling; 2) It easily grows unbalanced after massive points or space operations which deteriorates efficiency of points  queries. Hence re-building is required to re-balance the tree. 3) The re-building should be sufficiently efficient to enable real-time robotic applications.
	
    \begin{figure}[t]
        \setlength\abovecaptionskip{-0.7\baselineskip}
        \centering
        \includegraphics[width=0.5\textwidth]{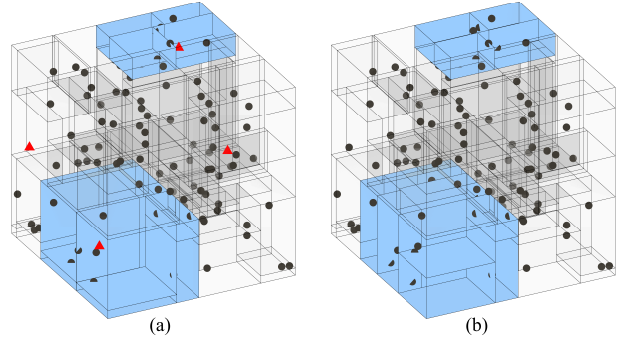}
        \caption{Illustration of incremental k-d tree update and re-balancing. (a): an existing k-d tree (black dots) and new points (red triangles) to insert, blue cubes denote the space (i.e., branches) need to be re-balanced. (b): the k-d tree after points insertion and tree re-balancing, blue cubes denote the space after re-balancing while rest majority tree does not change. 
        }
        \vspace{-0.5cm}
        \label{fig:kd_tree_space}
    \end{figure}


	In this paper, we propose a dynamic k-d tree structure called ikd-Tree, which builds and incrementally updates a k-d tree with new points only while simultaneously down-sample them into the desired resolution. It supports incremental operations including insertion, re-insertion, and delete of a single point (i.e., point-wise) or a box of points (i.e., box-wise). The tree is automatically re-balanced by partial re-building. To preserve efficient real-time tree update, ikd-Tree separates the tree re-building in two parallel threads when necessary. This paper also provides a complete time complexity analysis for all tree updates, including both incremental operations and re-balancing. The time complexity of ikd-Tree is reduced substantially as verified on both random data and real-world point-cloud in LiDAR-inertial mapping applications. The ikd-Tree is open sourced at Github\footnote{Git: \url{https://github.com/hku-mars/ikd-Tree.git}\label{footnote:git}}. Fig. \ref{fig:kd_tree_space} illustrates the incremental updates and re-balancing on the tree from a 3-D view.
	
	The remaining paper is organized as follows: Section \ref{sec:relatedwork} introduces related work. The design of ikd-Tree is described in Section \ref{sec:3}. Theoretical analysis of time and space complexity are presented in Section \ref{sec:theory}. Experiments are shown in Section \ref{sec:experiments}, followed by conclusions in Section \ref{sec:conclusion}.
	
	\section{Related work}\label{sec:relatedwork}

    A k-d tree can be viewed as a binary search tree and inherits the same  incremental operations (i.e., insert, re-insert, and delete), such as those in AVL trees \cite{bayer1972AVL}, treaps \cite{aragon1989treap} and splay trees\cite{sleator1985splay}. In these techniques, re-balancing of a binary search tree after many points operations can be easily achieved by tree node rotations. However, this straightforward tree rotation only works for one-dimensional data. For k-d trees with higher data dimension, it requires much more complicated tree reorganization.
    
    Strategies specifically designed for fast re-balancing k-d trees fall into two categories: hardware-based acceleration and specially designed structure enabling dynamic re-balancing. Hardware-based algorithms exploits the computing hardware to (re-) balance a kd-tree by building a new one. It has been thoroughly investigated to solve the ray tracing problem in dynamic scenes. In 3D graphic applications, algorithms on single-core CPU\cite{huntsinglecore, popov2006sah} and multi-core CPU \cite{shevtsovmulticore} are firstly proposed to speed up the k-d tree construction. Zhou \textit{et al.} proposed a real-time construction algorithm on GPU\cite{zhougpu}. These algorithms rely heavily on the computing resource which is usually limited on onboard computers. 
    
    For the second category, Bentley \textit{et al.} proposed a general binary transformation method for converting a static k-d tree to a dynamic one \cite{bentley1980decomposable}. The dynamic k-d tree supports only insertion but not delete, which leads to a growing tree size hence increased time for nearest search. Galperin \textit{et al.} \cite{scapegoat} proposes a scapegoat k-d tree that can dynamically re-balance the tree by re-building unbalanced sub-trees, which is much more efficient than a complete re-building of the entire tree. Bkd tree \cite{procopiuc2003bkd} is a dynamic data structure extended from a K-D-B tree\cite{robinson1981kdb} which focus on external memory adaptions. A set of static k-d trees are built in the Bkd tree where the trees are re-balanced by re-building partial set of the trees at regular intervals. The well-known point cloud library (PCL)\cite{rusu20113d} uses the fast library for approximate nearest neighbors (FLANN) search\cite{muja2009flannfast}. Point insertion and delete are supported in FLANN but the trees are re-balanced via inefficient complete tree re-building after a predetermined amount of points insertion or delete \cite{muja2009flannmanual}. 

    Our ikd-Tree is an efficient and complete data structure enabling incremental operations (i.e., insert, re-insert, and delete) and dynamic re-balancing of k-d trees. Compared to the dynamic k-d tree in \cite{bentley1980decomposable}, our implementation supports points delete. Besides the point-wise operations presented in \cite{bayer1972AVL, aragon1989treap, sleator1985splay} and \cite{scapegoat}, our ikd-Tree further supports the incremental operations of a box of points (i.e., box-wise operations) and simultaneous points down-sampling. The dynamic tree re-balancing strategy of ikd-Tree follows the concept of scapegoat trees in \cite{scapegoat}, which only re-builds those unbalanced sub-trees. The ikd-Tree is particularly suitable for robotic applications, such as real-time LiDAR mapping and motion planning, where data are sampled sequentially and fast incremental update is necessary.
    
    

	\section{ikd-Tree Design and Implementation}\label{sec:3}
    	In this section, we describe how to design, build, and update an incremental k-d tree in ikd-Tree to allow incremental operations (e.g., insertion, re-insertion, and delete) and dynamic re-balancing. 
    	
        \subsection{Data Structure}

        The attributes of a tree node in ikd-Tree is presented in \textbf{Data Structure \ref{struct}}. Line 2-4 are the common attributes for a standard k-d tree. The attributes $leftson$ and $rightson$ are pointers to its left and right son node, respectively. The point information (e.g., point coordinate, intensity) are stored in $point$. Since a point corresponds a single node on a k-d tree, we will use points and nodes interchangeably. The division axis is recorded in $axis$. Line 5-7 are the new attributes designed for incremental updates detailed in Section \ref{subsec:increupdates}. 
    	\begin{algorithm}
        \label{struct}    	
        \SetAlgoLined
        \NoCaptionOfAlgo
        \SetKwProg{kdtreenode}{Struct}{:}{end}
        \kdtreenode{\text{TreeNode}}{
        \tcp{Common Attributes in Standard K-D trees}
        PointType $point$;\\
        TreeNode * $leftson$, $rightson$;\\
        int $axis$;\\
        \tcp{New Attributes in ikd-Tree}
        int $treesize$, $invalidnum$;\\
        bool $deleted$, $treedeleted$, $pushdown$;\\        
        float $range[k][2]$;
        }
        \caption{\textbf{Data Structure 1}: Tree node structure}
        \end{algorithm}        
        \vspace{-0.3cm}
    	\subsection{Building An Incremental K-D Tree}
    	Building an incremental k-d tree is similar to building a static k-d tree except maintaining extra information for incremental updates. The entire algorithm is shown in {\bf Algorithm \ref{alg:build}}: given a point array $V$, the points are firstly sorted by the division axis with maximal covariance (Line 4-5). Then the median point is saved to $point$ of a new tree node $T$ (Line 6-7). Points below and above the median are passed to the left and right son nodes of $T$, respectively, for recursive building (Line 9-10).  The \texttt{LazyLabelInit} and \texttt{Pullup} in Line 11-12 update all attributes necessary for incremental updates (see \textbf{Data Structure \ref{struct}}, Line 5-7) detailed in Section \ref{subsec:increupdates}. 
    	\setcounter{algocf}{0}     	
        \begin{algorithm}
            \SetAlgoLined
            \DontPrintSemicolon
            \KwIn{$ V, N $\Comment Point Array and Point Number}    
            \KwOut{$ RootNode $ \Comment K-D Tree Node}
            \SetKwFunction{update}{Pullup}            
            \SetKwFunction{build}{Build}
            $RootNode$ = \build{$V,0,N-1$};\\
            \SetKwProg{Fn}{Function}{}{}
            
            \Fn{\build{$V,l,r$}}{
                $ mid  \leftarrow \lfloor (l+r)/2 \rfloor $;\\
                $ Axis \leftarrow \text{Axis with Maximal Covariance}$;\\
                $ V \leftarrow sort(V,axis)$;\\
                Node $T$;\\
                $T.point \leftarrow V[mid]$;\\ 
                $T.axis \leftarrow Axis$;\\
                $T.leftson \leftarrow$ \build{$V,l,mid-1$};\\
                $T.rightson \leftarrow$ \build{$V,mid+1,r$};\\
                \texttt{LazyLabelInit}($T$); \\
                \update{$T$};\\
                \textbf{return} $ T $;
            }
            \textbf{End Function}
            \caption{Build a balanced k-d tree}
            \label{alg:build}              
        \end{algorithm}    	
        \addtolength{\textfloatsep}{-0.2cm}
            
    	\subsection{Incremental Updates}\label{subsec:increupdates}
    	
    	The incremental updates refer to incremental operations followed by a dynamic re-balancing detailed in Section \ref{sec:balance}. The incremental operations include insertion, delete and re-insertion of points to/from the k-d tree. Specifically, the insertion operation appends a new point (i.e., a node) to the k-d tree. In the delete operation, we use a lazy delete strategy. That is, the points are not removed from the tree immediately but only labeled as ``deleted" by setting the attribute $deleted$ to true (see \textbf{Data Structure \ref{struct}}, Line 6). If all nodes on the sub-tree rooted at $T$ have been deleted, the attribute $treedeleted$ of $T$ is set to true. Therefore the attributes $deleted$ and $treedeleted$ are called lazy labels.  If points labeled as ``deleted" but not removed are later inserted to the tree, it is referred to as ``re-insertion" and is efficiently achieved by simply setting the $deleted$ attribute back to false. Otherwise, points labeled as ``deleted" will be removed from the tree during re-building process(see Section \ref{sec:balance}). 
    	
    	Our incremental updates support two types: point-wise updates and box-wise updates. The point-wise updates insert, delete, or re-insert a single point on the tree while the box-wise updates insert, delete or re-insert all points in a given box aligned with the data coordinate axis. Box-wise updates may require to delete or re-insert an entire sub-tree rooted at $T$. In this case, recursively updating the lazy labels $deleted$ and $treedeleted$ for all offspring nodes of $T$ are still inefficient. To address this issue, we use a further lazy strategy to update the lazy labels of the offspring nodes. The lazy label for lazy labels $deleted$ and $treedeleted$ is $pushdown$ (see \textbf{Data Structure \ref{struct}}, Line 6). The three labels $deleted$, $treedeleted$, and $pushdown$ are all initialized as false in \texttt{LazyLabelInit} (see {\bf Algorithm \ref{alg:build}}, Line 11).
    	
        
    	\subsubsection{Pushdown and Pullup}
    	Two supporting functions, \texttt{Pushdown} and \texttt{Pullup}, are designed to update attributes on a tree node $T$. The \texttt{Pushdown} function copies the labels $deleted$, $treedeleted$, and $pushdown$ of $T$ to its sons (but not further offsprings) when the attribute $pushdown$ is true. The \texttt{Pullup} function summarizes the information of the sub-tree rooted at ${T}$ to the following attributes of node $T$: $treesize$ (see \textbf{Data Structure \ref{struct}}, Line 5) saving the number of all nodes on the sub-tree, $invalidnum$ saving the number of nodes labelled as ``deleted" on the sub-tree, and $range$ (see \textbf{Data Structure \ref{struct}}, Line 7) summarising the range of all points on the sub-tree along coordinate $axis$, where $k$ is the points dimension.

    	\subsubsection{Point-wise Updates} The point-wise updates on the incremental k-d tree are implemented in a recursive way which is similar to the scapegoat k-d tree\cite{scapegoat}. For point-wise insertion, the algorithm searches down from the root node recursively and compare the coordinate on division axis of the new point with the points stored on the tree nodes until a leaf node is found to append a new tree node. For delete or re-insertion of a point $P$, the algorithm finds the tree node storing the point $P$ and modifies the attribute $deleted$. Further details can be found in our Github repository\footref{footnote:git}.
        \begin{algorithm}[t]
        \SetAlgoLined
            \SetKwFunction{pushdown}{Pushdown}
            \SetKwFunction{check}{CheckCriterion}
            \SetKwFunction{rebuild}{Rebuild}
            \SetKwFunction{update}{Pullup}
            \SetKwProg{Fn}{Function}{}{}
        \KwIn{$ C_O$ \Comment Operation box\newline
        $T$ \Comment K-D Tree Node\newline
        $SW$ \Comment Switch of Parallelly Re-building}
            \SetKwFunction{boxwiseop}{BoxwiseUpdate}
            \SetKwFunction{lazy}{UpdateLazyLabel}
            \SetKwFunction{pushdown}{Pushdown}
            \SetKwFunction{check}{ViolateCriterion}
            \SetKwFunction{rebuild}{Rebuild}
            \SetKwFunction{parrebuild}{ParallelRebuild}
            \SetKwFunction{update}{Pullup}
            \SetKwFunction{newthread}{ThreadSpawn}
            \SetKwProg{Fn}{Function}{}{}
            \Fn{\boxwiseop{$T,C_O,SW$}}{
                \pushdown{$T$};\\
                $C_T \leftarrow T.range$;\\
                \lIf{$C_T \cap C_O = \varnothing$ }{\textbf{return}}

                \eIf{$C_T \subseteqq C_O$}{
                    \lazy{};\\
                    $T.pushdown$ = true;\\
                    \textbf{return};
                }{
                $P\leftarrow T.point$;\\
                \lIf{$P\subset C_O$}{Modify $T.deleted$}
                \boxwiseop{$T.leftson,C_O,SW$};\\
                \boxwiseop{$T.rightson,C_O,SW$};\\
                }
                \update{$T$};\\
                \If{\check{$T$}}{
                    \eIf{$T.treesize< N_{\max}$ \textbf{or} Not $SW$}{
                        \rebuild{$T$}}{
                        \newthread{\parrebuild, $T$}
                    }
                }
            }
            \textbf{End Function}\\
        \caption{Box-wise Updates}
        \label{alg:boxwise}         
        \end{algorithm}     	
        \addtolength{\textfloatsep}{-0.2cm}        
        \subsubsection{Box-wise Updates}
        The box-wise insertion is implemented by inserting the new points one by one into the incremental k-d tree. Other box-wise updates (box-wise delete and re-insertion) are implemented utilizing the range information in attribute $range$, which forms a box $C_T$, and the lazy labels on the tree nodes. The pseudo code is shown in \textbf{Algorithm \ref{alg:boxwise}}. Given the box of points $C_O$ to be updated on (sub-) tree rooted at $T$, the algorithm first passes down its lazy labels to its sons for further passing-down if visited (Line 2). Then, it searches the k-d tree from its root node recursively and checks whether the range $C_T$ on the (sub-)tree rooted at the current node $T$ has an intersection with the box $C_O$. If there is no intersection, the recursion returns directly without updating the tree (Line 4). If the box $C_T$ is fully contained in the box $C_O$, the box-wise delete set attributes $deleted$ and $treedeleted$ to true while the box-wise re-insertion set them to false by function \texttt{UpdateLazyLabel} (Line 6). The $pushdown$ attribute is set to true indicating that the latest incremental updates have not been applied to the offspring nodes of $T$. For the condition that $C_T$ intersects but not contained in $C_O$, the current point $P$ is firstly deleted from or re-inserted to the tree if it is contained in $C_O$ (Line 11), after which the algorithm looks into the son nodes recursively (Line 12-13) and updates all attributes of the current node $T$ (Line 15). Line 16-22 re-balance the tree if certain criterion is violated (Line 16) by re-building the tree in the same (Line 18) or a separate (Line 20) thread. The function \texttt{ViolateCriterion}, \texttt{Rebuild} and \texttt{ParrallelRebuild} are detailed in Section \ref{sec:balance}. 
        \begin{algorithm}[t]
        \SetAlgoLined
        \KwIn{$L$ \Comment Length of Downsample Cube\newline
        $P$ \Comment New Point}
            \SetKwFunction{boxsearch}{BoxwiseSearch}
            \SetKwFunction{boxdelete}{BoxwiseDelete}            
            \SetKwFunction{insertpoint}{PointwiseInsert}
            \SetKwFunction{nearest}{FindNearest}
            \SetKwFunction{getcenter}{Center}
            \SetKwFunction{cube}{FindCube}
            $C_D\leftarrow$ \cube{$L,P$} \\
            $P_{center}\leftarrow$ \getcenter{$C_D$};\\             
            $V\leftarrow$ \boxsearch{$RootNode, C_D$};\\
            $V.push(P)$;\\            
            $P_{nearest}\leftarrow$ \nearest$(V,P_{center})$;\\
            \boxdelete{$RootNode, C_D$}\\
            \insertpoint{$RootNode, P_{nearest}$};
        \caption{Downsample}
        \label{alg:downsample}
        \end{algorithm}        
        \begin{figure}[t]
            \setlength\abovecaptionskip{-0.1\baselineskip}
            \centering
            \includegraphics[width=0.48\textwidth]{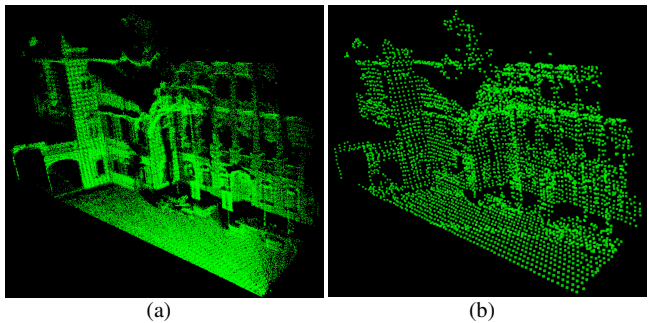}
            \caption{Point Cloud Downsample. (a): the point cloud before down-sampling. (b): the point cloud after down-sampling}
            \label{fig:downsample} 
            \vspace{-1.1cm}
        \end{figure}  
    	\subsubsection{Downsample}        
        Our ikd-Tree further supports down-sampling as detailed in {\bf Algorithm \ref{alg:downsample}}. For the given point $P$ and down-sampling resolution $L$, the algorithm partitions the space evenly into cubes of length $L$, then the box $C_D$ that contains point $P$ is found (Line 1). The algorithm only keeps the point that is nearest to the center $P_{center}$ of $C_D$ (Line 2). This is achieved by firstly searching all points contained in $C_D$ on the k-d tree and stores them in a point array $V$ together with the new point $P$ (Line 3-4). The nearest point $P_{nearest}$ is obtained by comparing the distances of each point in $V$ to the center $P_{center}$ (Line 5). Then existing points in $C_D$ are deleted (Line 6), after which the nearest point $P_{nearest}$ is inserted to the k-d tree (Line 7). The implementation of box-wise search is similar to the box-wise delete and re-insertion (see {\bf Algorithm \ref{alg:boxwise}}). An example of downsample is shown in Fig. \ref{fig:downsample}. 
        
        In summary, Table \ref{tab:incre} shows the comparison of supported incremental updates on the static k-d tree\cite{bentley1975kdtree}, the dynamic k-d tree\cite{bentley1980decomposable}, the scapegoat k-d tree\cite{scapegoat} and our ikd-Tree.

    	\setlength{\tabcolsep}{0.575em}
    	\begin{table}
        \renewcommand{\arraystretch}{1.2}    	
    	\centering
        \caption{Comparison of Supported Incremental Updates}
\begin{tabular}{|cc|c|c|c|c|}
\hline
\multirow{2}{*}{}                                                                            & \multirow{2}{*}{} & \multirow{2}{*}{\begin{tabular}[c]{@{}c@{}}Static\\ K-D Tree\end{tabular}} & \multirow{2}{*}{\begin{tabular}[c]{@{}c@{}}Dynamic\\ K-D Tree\end{tabular}} & \multirow{2}{*}{\begin{tabular}[c]{@{}c@{}}Scapegoat\\ K-D Tree\end{tabular}} & \multirow{2}{*}{ikd-Tree} \\
                                                                                             &                   &                                                                            &                                                                             &                                                                               &                           \\ \hline
\multicolumn{1}{|c|}{\multirow{3}{*}{\begin{tabular}[c]{@{}c@{}}Point-\\ wise\end{tabular}}} & Insert            & \xmark                                                                     & \cmark                                                                      & \cmark                                                                        & \cmark                    \\
\multicolumn{1}{|c|}{}                                                                       & Delete            & \xmark                                                                     & \xmark                                                                      & \cmark                                                                        & \cmark                    \\
\multicolumn{1}{|c|}{}                                                                       & Re-insert         & \xmark                                                                     & \xmark                                                                      & \xmark                                                                        & \cmark                    \\ \hline
\multicolumn{1}{|c|}{\multirow{3}{*}{\begin{tabular}[c]{@{}c@{}}Box-\\ wise\end{tabular}}}   & Insert            & \xmark                                                                     & \cmark                                                                      & \cmark                                                                        & \cmark                    \\
\multicolumn{1}{|c|}{}                                                                       & Delete            & \xmark                                                                     & \xmark                                                                      & \xmark                                                                        & \cmark                    \\
\multicolumn{1}{|c|}{}                                                                       & Re-insert         & \xmark                                                                     & \xmark                                                                      & \xmark                                                                        & \cmark                    \\ \hline
\multicolumn{2}{|c|}{Downsample}                                                                                 & \xmark                                                                     & \xmark                                                                      & \xmark                                                                        & \cmark                    \\ \hline
\end{tabular}
        \label{tab:incre}         
        \end{table}

\subsection{Re-balancing} \label{sec:balance}
    	Our ikd-Tree actively monitors the balance property of the incremental k-d tree and dynamically re-balance it by partial re-building.
    	
    	\subsubsection{Balancing Criterion}
    	The balancing criterion is composed of two sub-criterions: $\alpha$-balanced criterion and $\alpha$-deleted criterion. Suppose a sub-tree of the incremental k-d tree is rooted at $T$. The sub-tree is $\alpha$-balanced if and only if it satisfies the following condition:
        \addtolength{\abovedisplayskip}{-0.1cm}
        \addtolength{\belowdisplayskip}{-0.1cm}     	
    	\begin{equation}
    	\begin{aligned}
    	    S(T.leftson) &< \alpha_{bal} \Big(S(T)-1\Big)\\
    	    S(T.rightson) &< \alpha_{bal} \Big(S(T)-1\Big)
    	\end{aligned}
    	\label{eq:abalance}
    	\end{equation}
        \addtolength{\abovedisplayskip}{0.1cm}
        \addtolength{\belowdisplayskip}{0.1cm}     	
    	where $\alpha_{bal} \in (0.5,1)$ and $S(T)$ is the $treesize$ attribute of the node $T$. 
    	
    	The $\alpha$-deleted criterion of the sub-tree rooted at $T$ is
    	\begin{equation}
    	    \begin{aligned}
        	    I(T)< \alpha_{del} S(T)
    	    \end{aligned}
    	\label{eq:adeleted}
    	\end{equation}
    	where $\alpha_{del} \in(0,1)$ and $I(T)$ denotes the number of invalid nodes on the sub-tree (i.e., the attributes $invalidnum$ of node $T$). 
    	
    	If a sub-tree of the incremental k-d tree meets both criterion, the sub-tree is balanced. The entire tree is balanced if all sub-trees are balanced. Violation of either criterion will trigger a re-building process to re-balance that (sub-) tree: the $\alpha$-balanced criterion maintains the maximum height of the (sub-) tree. It can be easily proved that the maximum height of an $\alpha$-balanced tree is $\log_{1/\alpha_{bal}} (n)$ where $n$ is the tree size; the $\alpha$-deleted criterion ensures invalid nodes (i.e., labeled as ``deleted") on the (sub-) trees are removed to reduce tree size. Reducing height and size of the k-d tree allows highly efficient incremental operations and queries in future. The function \texttt{ViolateCriterion} in \textbf{Algorithm \ref{alg:boxwise}, Line 16} returns true if either criterion is violated. 
    	
    	\subsubsection{Re-build}\label{sec:singlerebuild}
        Assuming re-building is triggered on a subtree $\mathcal{T}$ (see Fig. \ref{fig:rebuild}), the sub-tree is firstly flattened into a point storage array $V$. The tree nodes labeled as ``deleted" are discarded during flattening. A new perfectly balanced k-d tree is then built with all points in $V$ by \textbf{Algorithm \ref{alg:build}}. 
        \begin{figure}
            \setlength\abovecaptionskip{-0.7\baselineskip}
            \centering
            \includegraphics[width = 0.5\textwidth]{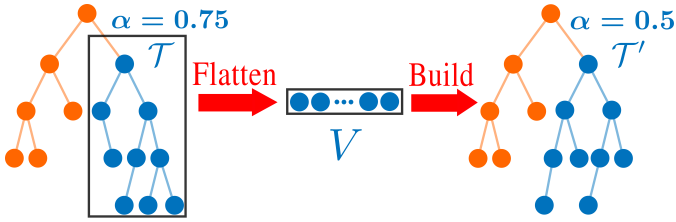}
            \caption{Re-build an unbalanced sub-tree}
            \vspace{-0.5cm}
            \label{fig:rebuild}            
        \end{figure}    	

    	\subsubsection{Parallel Re-build}
    	An evident degradation of real-time ability is observed when re-building a large sub-tree on the incremental k-d tree. To preserve high real-time ability, we design a double-thread re-building method: the main thread only re-builds sub-trees whose size is smaller than a predetermined value $N_{\max}$ and the second thread re-builds the rest. The key problem is how to avoid information lose and memory conflicts between the main thread and the second thread. 
    	
        The re-building algorithm on the second thread is shown in \textbf{Algorithm \ref{alg:rebuild}}. Denote the sub-tree to re-build in the second thread as $\mathcal{T}$ and its root node as $T$. The second thread will lock all incremental updates (i.e., points insert, re-insert, and delete) but not queries on this sub-tree (Line 2). Then the second thread copies all valid points contained in the sub-tree $\mathcal{T}$ into a point array $V$ (i.e. flatten) while leaving the original sub-tree unchanged for possible queries during the re-building process (Line 3). After the flattening, the sub-tree is unlocked for the main thread to take further requests of incremental updates (Line 4). These requests will be suspended and recorded in a queue named as operation logger. Once the second thread completes building a new balanced k-d tree $\mathcal{T}'$ from the point array $V$ (Line 5), the recorded update requests will be performed on the balanced sub-tree $\mathcal{T}'$ by function \texttt{IncrementalUpdates} (Line 6-8) where the parallel re-building option is set to false (as it is already in the second thread). After all pending requests are processed, the algorithm locks the node $T$ from both incremental updates and queries and replace it with the new one $T'$ (Line 9-12). Finally, the algorithm frees the memory of the original sub-tree (Line 13). Note that \texttt{LockUpdates} does not block queries, which can be conducted parallelly in the main thread. In contrast, \texttt{LockAll} blocks all access including queries, but it finishes very quickly (i.e., only one instruction), allowing timely queries in the main thread. The function \texttt{LockUpdates} and \texttt{LockAll} are implemented by mutual exclusion (mutex).
        \begin{algorithm}
        \SetAlgoLined
        \KwIn{$T$ \Comment Root node of $\mathcal{T}$ for re-building}
        \SetKwFunction{rebuild}{Rebuild}
        \SetKwFunction{build}{Build}
        \SetKwFunction{parrebuild}{ParallelRebuild}
        \SetKwFunction{flatten}{Flatten}
        \SetKwFunction{lockop}{LockUpdates}
        \SetKwFunction{lockall}{LockAll}
        \SetKwFunction{unlock}{Unlock}
        \SetKwFunction{release}{Free}
        \SetKwFunction{runop}{IncrementalUpdates}
        \SetKwFunction{sleep}{Sleep}
            \SetKwProg{Fn}{Function}{}{}
            \Fn{\parrebuild{$T$}}{
                \lockop{$T$};\\
                $V \leftarrow \flatten{T}$;\\
                \unlock{$T$};\\
                $T'\leftarrow \build{V,0,size(V)-1}$;\\
                \ForEach{ $op$ \textbf{in} $OperationLogger$}{
                    \runop{$T',op,false$}
                }
                $T_{temp} \leftarrow T$;\\                
                \lockall{$T$};\\                        
                $T \leftarrow T'$;\\
                \unlock{$T$};\\
                \release{$T_{temp}$}; \\
            }
            \textbf{End Function}
        \caption{Parallelly Rebuild for Re-balancing}
        \label{alg:rebuild}        
        \end{algorithm}


	\subsection{K-Nearest Neighbor Search}
	    The nearest search on the incremental k-d tree is an accurate nearest search \cite{friedman1975nearestsearch} instead of an approximate one as \cite{muja2009flannfast}. The function \texttt{Pushdown} is applied before searching the sub-tree rooted at node $T$ to pass down its lazy labels. We use the attribute $range$ to speed up the search process thus hard real-time ability is preserved. Due to the space limit, the details of k-nearest search algorithm is not presented in this paper. Interested readers can refer to the related codes in our open source library. 

    \section{Complexity Analysis}\label{sec:theory}
        \subsection{Time Complexity}
        The time complexity of ikd-Tree breaks into the time for incremental operations (insertion, re-insertion and delete) and re-building. 
        
        \subsubsection{Incremental Operations}
        The time complexity of point-wise operations is given as 
        \begin{lemma}[Point-wise Operations]
        \vspace{0.10cm}
        \textit{An incremental k-d tree can handle a point-wise incremental operation with time complexity of $O(\log n)$ where $n$ is the tree size.}
        \label{Lemma:pointwise}        
        \vspace{0.10cm}        
        \end{lemma}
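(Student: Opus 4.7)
The plan is to decompose the cost of any point-wise operation into (i) the depth of the recursion from the root down to the affected leaf or node, and (ii) the work done at each recursion level, and then bound each factor separately.

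First I would argue that the depth of recursion is $O(\log n)$. Because ikd-Tree continuously monitors the $\alpha$-balanced criterion in \eqref{eq:abalance} and re-builds any sub-tree that violates it, the tree at the moment a point-wise operation begins satisfies $S(T.leftson), S(T.rightson) < \alpha_{bal}(S(T)-1)$ at every internal node $T$. A standard induction on tree size then shows that the height is at most $\log_{1/\alpha_{bal}}(n)$, exactly the bound already claimed informally in Section \ref{sec:balance}. Since insertion descends along a single root-to-leaf path (at each node, the new point's coordinate on $axis$ determines which child to recurse into), and since deletion/re-insertion of a specific point $P$ follows the same single-branch descent until the node storing $P$ is found, the number of nodes visited is at most the height, i.e.\ $O(\log n)$.

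Next I would argue that the per-node work is $O(1)$. At each visited node the algorithm performs a constant number of coordinate comparisons and attribute updates: a \texttt{Pushdown} call copies three Boolean labels to the two immediate children, the $deleted$ flag (for delete/re-insertion) is flipped in place, and a \texttt{Pullup} call aggregates $treesize$, $invalidnum$, and $range[k][2]$ from the two children and the current point, which for fixed dimension $k$ is constant-time. Multiplying the $O(1)$ per-level cost by the $O(\log n)$ depth gives the claimed $O(\log n)$ bound. The time spent on re-balancing triggered by the operation is explicitly excluded here, consistent with the paper's decomposition of the complexity analysis into incremental operations and re-building, which is treated in the subsequent lemma.

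The main obstacle, I expect, is justifying the height bound rigorously in the presence of the lazy deletion machinery, since $S(T)$ counts all nodes on the sub-tree rooted at $T$ including those marked $deleted$, whereas what matters for traversal cost is exactly the total number of nodes (valid plus invalid). Fortunately this is the right quantity: because the $\alpha$-balanced criterion is stated in terms of $S(T)$ itself and $n$ in the lemma denotes the $treesize$ of the whole tree, the inductive height argument goes through without having to reason about $invalidnum$ at all. The $\alpha$-deleted criterion and any re-building it induces are only needed to control amortized cost and are not required for the per-operation $O(\log n)$ traversal bound.
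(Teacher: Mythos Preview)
Your proposal is correct and follows essentially the same approach as the paper: both arguments rest on the height bound $\log_{1/\alpha_{bal}}(n)$ derived from the $\alpha$-balanced criterion in \eqref{eq:abalance}, combined with constant work per visited node along a single root-to-leaf path. The only difference is presentational: the paper's proof is terser, invoking the height bound and then citing Bentley's original k-d tree paper for the $O(\log n)$ insertion/deletion cost, whereas you unpack the per-node work (\texttt{Pushdown}, \texttt{Pullup}, flag flips) explicitly and address the lazy-deletion subtlety that $S(T)$ counts all nodes including those marked deleted---a point the paper leaves implicit.
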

        \begin{proof} The maximum height of an incremental k-d tree can be easily proved to be $\log_{1/\alpha_{bal}}(n)$ from Eq. (\ref{eq:abalance}) while that of a static k-d tree is $\log_2 n$. Hence the lemma is directly obtained from \cite{bentley1975kdtree} where the time complexity of point insertion and delete on a k-d tree was proved to be $O(\log n)$. The point-wise re-insertion modifies the attribute $deleted$ on a tree node thus the time complexity is the same as point-wise delete.
        \end{proof}
        
        The time complexity of box-wise operations on an incremental k-d tree is:
        \begin{lemma}[Box-wise Operations]
        \vspace{0.10cm}  
        \textit{An incremental 3-d tree handles box-wise insertion of $m$ points in $C_D$ with time complexity of $O(m\log n)$. Furthermore, suppose points on the 3-d tree are in space $S_x\times S_y\times S_z$ and $C_D = L_x\times L_y \times L_z$. The box-wise delete and re-insertion can be handled with time complexity of $O(H(n))$, where }
        \begin{equation}
            O(H(n)) = 
            \begin{cases}
                O(\log n) &  \text{if} \Delta_{\min} \geqslant \alpha(\frac{2}{3})\text{(*)}\\
                O(n^{1-a-b-c}) & \text{if} \Delta_{\max}\leqslant 1-\alpha(\frac{1}{3})\text{(**)}\\
                O(n^{\alpha(\frac{1}{3})-\Delta_{\min}-\Delta_{\text{med}}}) & \text{if (*) and (**) fail and}\\
                 & \Delta_{\text{med}}<\alpha(\frac{1}{3})-\alpha(\frac{2}{3})\\
                O(n^{\alpha(\frac{2}{3})-\Delta_{\min}}) & \text{otherwise.}
            \end{cases}
        \end{equation} 
        \textit{where $a = \log_n \frac{S_x}{L_x}$, $b = \log_n \frac{S_y}{L_y}$ and $c = \log_{n} \frac{S_z}{L_z}$ with $a,b,c\geqslant 0$. $\Delta_{\min}$, $\Delta_{\text{med}}$ and $\Delta_{\max}$ are the minimal, median and maximal value among $a$, $b$ and $c$. $\alpha (u)$ is the flajolet-puech function with $u\in [0,1]$, where particular value is provided: $\alpha(\frac{1}{3}) = 0.7162$ and $\alpha(\frac{2}{3})=0.3949$} 
        
        \begin{proof}
            The box-wise insertion is implemented by point-wise insertion thus the time complexity can be directly obtained from Lemma \ref{Lemma:pointwise}. An asymptotic time complexity for range search on a k-d tree is provided in \cite{chanzy2001rangesearch}. The box-wise delete and re-insertion can be regarded as a range search except that lazy labels are attached to the tree nodes. Therefore, the conclusion of range search can be applied to the box-wise delete and re-insertion on the incremental k-d tree.  
        \end{proof}
        \label{Lemma:boxwise}        
        \vspace{0.10cm}
        \end{lemma}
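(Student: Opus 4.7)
The plan is to split the lemma into its two parts and handle them with different machinery. For box-wise insertion of $m$ points in $C_D$, I would appeal directly to the implementation: the text preceding \textbf{Algorithm \ref{alg:boxwise}} specifies that a box-wise insertion is nothing more than $m$ successive point-wise insertions. Each such insertion costs $O(\log n)$ by Lemma \ref{Lemma:pointwise}, and since the tree size grows only by a constant factor during the batch, summing yields $O(m\log n)$ with no further argument.

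For the box-wise delete and re-insertion bound, my approach is to reduce the analysis to that of a classical orthogonal range query on a balanced $3$-d tree. First I would inspect \textbf{Algorithm \ref{alg:boxwise}} and verify that at every visited node the non-recursive work (the \texttt{Pushdown} call, the intersection test on $C_T\cap C_O$, the lazy-label update via \texttt{UpdateLazyLabel}, the \texttt{Pullup}, and the balance check) is $O(1)$; the overall running time is therefore proportional to the number of visited nodes. Second, I would characterise that set: the recursion prunes when $C_T\cap C_O=\varnothing$ and stops with $O(1)$ work when $C_T\subseteq C_O$, so the only nodes forcing deeper exploration are those whose range box straddles the boundary of $C_O$, together with their direct children. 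This is exactly the visit pattern of an orthogonal range search; lazy deletion affects only the constant work per node, not the set of nodes visited, because pruning is driven by the geometric $range$ attribute maintained by \texttt{Pullup} rather than by the logical $deleted$ flag.

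Third, I would invoke the asymptotic analysis of Chanzy \textit{et al.}~\cite{chanzy2001rangesearch}, which expresses the expected number of nodes visited in such a range query in terms of the Flajolet--Puech function $\alpha(u)$ and the logarithmic box side lengths $a$, $b$, $c$. After sorting $a,b,c$ into $\Delta_{\min}\leq\Delta_{\text{med}}\leq\Delta_{\max}$, the four cases in the lemma correspond one-for-one to the four regimes of that theorem specialised to dimension three: the ``all sides large'' regime collapses to $O(\log n)$, the ``all sides small'' regime gives the $O(n^{1-a-b-c})$ output-sensitive bound, and the two mixed regimes produce the $\alpha(1/3)$ and $\alpha(2/3)$ exponents.

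The hardest part will be justifying that the range-search theorem of \cite{chanzy2001rangesearch} transfers to our setting, where the tree is only $\alpha_{bal}$-balanced rather than perfectly balanced and where some nodes are logically deleted. I would argue that (i) since $\alpha$-balance already gives height $\log_{1/\alpha_{bal}}(n)=\Theta(\log n)$ and the split coordinate is chosen to be the one of maximal covariance at each level, the distributional hypotheses underlying the Flajolet--Puech recurrence continue to hold up to constants absorbed in the $O(\cdot)$; and (ii) invalid nodes still contribute to $treesize$ and thus cannot enlarge the visit set, they merely affect the $O(1)$ work per node. With these two reductions in hand, the piecewise formula for $H(n)$ follows directly from the known asymptotics of the characteristic exponents governing multidimensional range queries.
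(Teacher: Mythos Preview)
Your proposal is correct and follows essentially the same approach as the paper: reduce box-wise insertion to $m$ point-wise insertions via Lemma~\ref{Lemma:pointwise}, and reduce box-wise delete/re-insertion to an orthogonal range search whose asymptotics are imported from \cite{chanzy2001rangesearch}. The paper's own proof is considerably terser---it simply asserts that the operation ``can be regarded as a range search except that lazy labels are attached'' and cites \cite{chanzy2001rangesearch}---whereas you additionally spell out the $O(1)$-per-node justification and address the $\alpha$-balance and invalid-node issues; these elaborations are sound but go beyond what the paper itself argues.
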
 
        
        The down-sampling method on an incremental k-d tree is composed of box-wise search and delete followed by the point insertion. By applying Lemma \ref{Lemma:pointwise} and Lemma \ref{Lemma:boxwise}, the time complexity of downsample is $O(\log n)$+$O(H(n))$. Generally, the downsample hypercube $C_D$ is very small comparing with the entire space. Therefore, the normalized range $\Delta x$, $\Delta y$ and $\Delta z$ are small and the value of $\Delta_{\min}$ satisfies the condition (*) for time complexity of $O(\log n)$. Hence, the time complexity of down-sampling is $O(\log n)$.
        \subsubsection{Re-build} Time complexity for re-building breaks into two types: single-thread re-building and parallel two-thread re-building. In the former case, the re-building is performed by the main thread in a recursive way, each level takes the time of sorting (i.e., $O(n)$) and the total time over $\log n$ levels is $O(n\log n)$ \cite{bentley1975kdtree} when the the dimension $k$ is low (e.g., 3 in most robotic applications). For parallel re-building, the time consumed in the main thread is only flattening (which suspends the main thread from further incremental updates, \textbf{Algorithm \ref{alg:rebuild}}, Line 2-4) but not building (which is performed in parallel by the second thread, \textbf{Algorithm \ref{alg:rebuild}}, Line 5) or tree update (which takes constant time $O(1)$, \textbf{Algorithm \ref{alg:rebuild}}, Line 10-12), leading to a time complexity of $O(n)$. In summary, the time complexity of re-building an incremental k-d tree is $O(n)$ for two-thread parallel re-building and $O(n\log n)$ for single-thread re-building. 

        \subsubsection{Nearest Search}
        For robotic applications, the points dimension is usually very small. Hence the time complexity of k-nearest search on the incremental k-d tree can be simply approximated as $O(\log n)$ because the maximum height of the incremental k-d tree is maintained no larger than $\log_{\frac{1}{\alpha}}n$.        
        \subsection{Space Complexity}
        As shown Section \ref{sec:3}, each node on the incremental k-d tree records point information, tree size, invalid point number and point distribution of the tree. Extra flags such as lazy labels are maintained on each node for box-wise operations. For an incremental k-d tree with $n$ nodes, the space complexity is $O(n)$ though the space constant is a few times larger than a static k-d tree. 
    	\setlength{\tabcolsep}{0.9em}        
        \begin{table}
        \centering
        \caption{The Parameters Setup of ikd-Tree in Experiments}
        \begin{tabular}{|c|c|c|}
        \hline
                       & Randomized Data & LiDAR Inertial-Odometry Mapping \\ \hline
        $N_{\max}$     & 1500            & 1500                            \\
        $\alpha_{bal}$ & 0.6             & 0.6                             \\
        $\alpha_{del}$ & 0.5             & 0.5                             \\
        $C_D$          & -               & $0.2m\times 0.2m\times 0.2$     \\ \hline
        \end{tabular}
        \label{tab:param}  
        \vspace{-0.2cm}
        \end{table} 
    \section{Application Experiments}\label{sec:experiments}

        \subsection{Randomized Data Experiments}
        The efficiency of our ikd-Tree is fully investigated by two experiments on randomized incremental data sets. The first experiment generates 5,000 points randomly in a $10m\times10m\times10m$ space (i.e., the workspace) to initialize the incremental k-d tree. Then 1,000 test operations are conducted on the k-d tree. In each test operation, 200 new points randomly sampled in the workspace are inserted (point-wise) to the kd-tree. Then another 200 points are randomly sampled in the workspace and searched on (but not inserted to) the k-d tree for 5 nearest points of each. For every 50 test operations, 4 cubes are sampled in the workspace with side length of $1.5m$ and points contained in these 4 cubes are deleted (box-wise) from the k-d tree.  For every 100 test operations, 2,000 new points are sampled in the workspace and inserted (point-wise) to the k-d tree.  We compare the ikd-Tree with the static k-d tree used in point cloud library\cite{rusu20113d} where at each test operation the k-d tree is entirely re-built. The experiments are performed on a PC with Intel i7-10700 CPU at 2.90GHz and only 2 threads running. The parameters of the incremental k-d tree are summarized in Table \ref{tab:param} where no down-sampling is used to allow a fair comparison. Also the maximal point number allowed to store on the leaf node of a static k-d tree is set to 1 while the original setting in point cloud library is 15. 
          
        \begin{figure}[t]
            \setlength\abovecaptionskip{-0.1\baselineskip}
            \centering
            \includegraphics[width=0.485\textwidth]{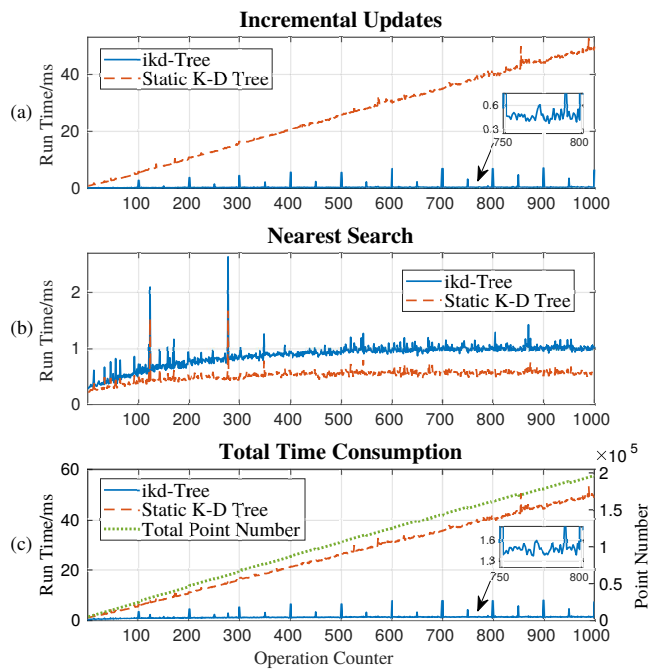}
            \caption{The time performance comparison between an ikd-Tree and a static k-d tree.}
            \label{fig:randomized_exp_combine}    
            \vspace{-0.4cm}
        \end{figure} 
        
        The results of the first experiment are shown in Fig. \ref{fig:randomized_exp_combine}, where the point number increases from 5,000 to approximate 200,000. In this process, the time for incremental updates (including both incremental operations and re-building) on the ikd-Tree remains stably around 1.6 $ms$ while that for the static k-d tree grows linearly with number of the points (see Fig. \ref{fig:randomized_exp_combine}(a)). The high peaks in the time consumption are resulted from the large-scale point-wise insertion (and associated re-balancing) and the low peaks are resulted from box-wise delete (and associated re-balancing). As shown in Fig. \ref{fig:randomized_exp_combine}(b), the time performance of the k-nearest search on the ikd-Tree is slightly slower than an static k-d tree, possibly due to the highly optimized implementation of the PCL library. Despite of the slightly lower efficiency in query, the overall time consumption of ikd-Tree outperforms the static k-d tree by one order of magnitude (See  Fig. \ref{fig:randomized_exp_combine}(c)). 

        \begin{figure}[t]
            \setlength\abovecaptionskip{-0.1\baselineskip}
            \centering
            \includegraphics[width=0.485\textwidth]{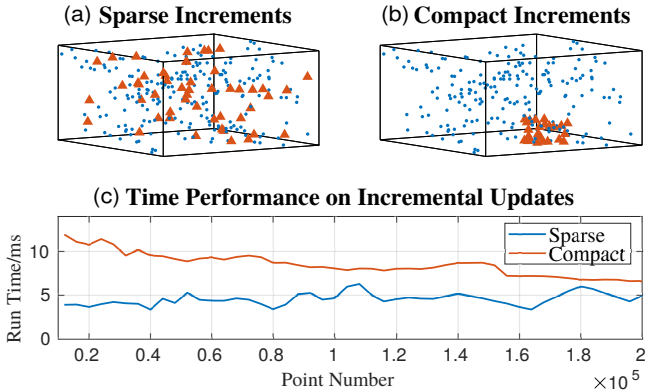}
            \caption{ Fig. (a) and (b) illustrate new points (orange triangles) and points already on the k-d tree (blue dots). Fig. (c) shows the time for incremental updates of sparse and compact data on k-d trees of different size.}
            \vspace{-0.5cm}
            \label{fig:randomized_exp_incre} 
        \end{figure}  

        The second experiment investigates the time performance of incremental updates for new points of different distribution. In the experiment, we sample two sets of 4,000 new points in a $10m\times10m\times10m$ space (i.e., the workspace): one is evenly distributed (i.e., sparse data, see Fig. \ref{fig:randomized_exp_incre} (a)) and the other concentrated in a $2.5m\times2.5m\times2.5m$ space (i.e., compact data, see Fig. \ref{fig:randomized_exp_incre} (b)). The sparse and compact data are inserted to an existing incremental k-d tree of different size but all sampled in the workspace. Fig. \ref{fig:randomized_exp_incre}(c) shows the running time of sparse and compact point-wise insertion on k-d trees of different size. As expected, the incremental updates for compact data are slower than the sparse one because re-building are more likely to be triggered when inserting a large amount of points into a small sub-tree of a k-d tree.

        \subsection{LiDAR Inertial-Odometry and Mapping}
        
        We test our developed ikd-Tree in an actual robotic application: lidar-inertial odometry (and mapping) presented in \cite{nuchter2007cachedKDtreeICP,segal2009generalizedicp, zhang2014loam, shan2018lego, lin2020loamlivox, xu2020fastlio}. In this application, k-d tree-based nearest points search is crucial to match a point in a new LiDAR scan to its correspondences in the map (or the previous scan). Since the map is dynamically growing by matching and merging new scans, the k-d tree has to be re-built every time a new scan is merged. Existing methods \cite{nuchter2007cachedKDtreeICP,segal2009generalizedicp, zhang2014loam, shan2018lego, lin2020loamlivox, xu2020fastlio} commonly used static kd-tree from PCL and rebuilds the entire tree based on all points in the map (or a submap). This leads to a significant computation costs severely limiting the map update rate (e.g., from $1Hz$ \cite{zhang2014loam, shan2018lego, lin2020loamlivox} to $10Hz$ \cite{xu2020fastlio}). 
        
        In this experiment, we replace the static k-d tree (build, update, and query) by our ikd-Tree, which enables incremental update of the map by updating the new points only.  We test ikd-Tree on the lidar-inertial mapping package FAST\_LIO in \cite{ xu2020fastlio}\footnote{\url{https://github.com/hku-mars/FAST\_LIO}}. The experiment is conducted on a real-world outdoor scene using a Livox Avia LiDAR\footnote{\url{https://www.livoxtech.com/de/avia}} \cite{liu2020low} with 70\textdegree FoV and a high frame rate of 100 $Hz$. All the algorithm is running on the DJI Manifold 2-C\footnote{ \url{https://www.dji.com/manifold-2/specs}} with a 1.8 $GHz$ quad-core Intel i7-8550U CPU and 8 $GB$ RAM.
        \begin{figure}[t]
            \setlength\abovecaptionskip{-0.7\baselineskip}        
            \centering
            \includegraphics[width=0.495\textwidth]{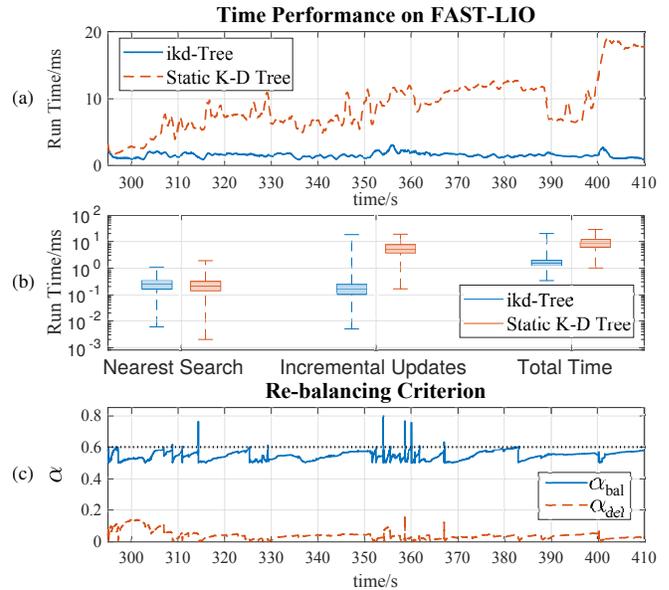}
            \caption{Fig. (a) shows the average running time of fusing one new lidar scan in FAST\_LIO using the ikd-Tree and a static k-d tree. Fig. (b) shows time for nearest search, incremental updates, and total time in fusing one lidar scan. Fig. (c) shows the balance property after re-building on main thread.}
            \label{fig:fastlio_exp_boxplot}            
        \end{figure}        
        \begin{figure}[t]
            \setlength\abovecaptionskip{-0.1\baselineskip}        
            \centering
            \includegraphics[width=0.485\textwidth]{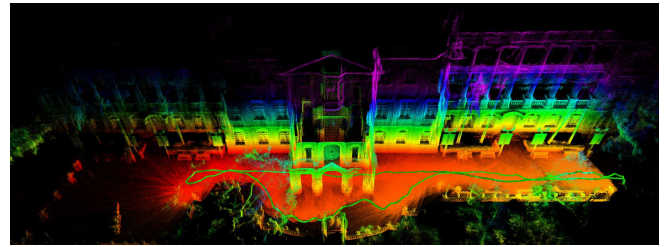}
            \caption{Mapping Result of the Main Building, University of Hong Kong. The green line is the path of the lidar computed by FAST\_LIO.}
            \label{fig:mappingresult}              
        \end{figure}  
         
         The time of fusing a new lidar scan in FAST\_LIO is shown in Fig. \ref{fig:fastlio_exp_boxplot}(a). The time is the averaged time of the most recent 100 scans. It is seen that the ikd-Tree achieves a nearly constant time performance around 1.6 $ms$, which fundamentally enables a mapping rate up to $100Hz$ (against $10Hz$ presented in the original work \cite{xu2020fastlio} using static k-d tree). On the other hand, the time with the static k-d tree is overall increasing linearly, and decreases occasionally due to the small overlap between the lidar current FoV and the map. The resultant processing time with the static k-d tree exceeds 10 $ms$ from 366$s$ on, which is more than the collection time of one lidar scan. 

         The time for fusing a new lidar scan consists of many operations, such as point registration, state estimation, and kd-tree related operations (including queries and update). The time breakdown of kd-tree related operations is shown in Fig. \ref{fig:fastlio_exp_boxplot}(b). The average time of incremental updates for ikd-Tree is 0.23 $ms$ which is only 4\% of that using an static k-d tree (5.71 $ms$). The average time of nearest search using the ikd-Tree and an static k-d tree are at the same level.
         
         Furthermore, Fig. \ref{fig:fastlio_exp_boxplot}(c) investigates the balancing property of the incremental kd-tree by examining the two criterions $\alpha_{bal}$ and $\alpha_{del}$, which are defined as:
        \begin{equation}
            \begin{aligned}
                \alpha_{bal}(T) &= \frac{\max\Big\{S(T.leftson),S(T.rightson)\Big\}}{S(T)-1}\\
                \alpha_{del}(T) &= \frac{I(T)}{S(T)}                
            \end{aligned}
        \end{equation}
        As expected, the two criterion are maintained below the prescribed thresholds (see Table \ref{tab:param}) due to re-building, indicating that the kd-tree is well-balanced through the incremental updates. The peaks over the thresholds is resulted from parallely re-building, which drop quickly when the re-building is done. Finally, Fig. \ref{fig:mappingresult} shows the $100Hz$ mapping results.

    \section{Conclusion}\label{sec:conclusion}
    This paper proposed an efficient data structure, ikd-Tree, to incrementally update a k-d tree in robotic applications. The ikd-Tree supports incremental operations in robotics while maintaining balanced by partial re-building. We provided a complete analysis of time and space complexity to prove the high efficiency of the proposed dynamic structure. The ikd-Tree was tested on a randomized experiment and an outdoor LiDAR odometry and mapping experiment. In all tests, the proposed data structure achieves two orders of magnitude higher efficiency.

	\bibliography{ral-kdtree}
	
\end{document}